\documentclass[10pt,twocolumn,letterpaper]{article}

\usepackage{style.files/cvpr,times,epsfig,graphicx,amsmath,amssymb}
\usepackage{amsthm,fixmath,tabulary,mathtools,xspace,multirow}
\usepackage[pagebackref=true,breaklinks=true,letterpaper=true,colorlinks,
  citecolor=citecolor,bookmarks=false]{hyperref}
\definecolor{citecolor}{RGB}{34,139,34}
\cvprfinalcopy


\usepackage{subcaption}
\captionsetup{font=small}
\captionsetup[sub]{font=small}

\newlength\savewidth\newcommand\shline{\noalign{\global\savewidth\arrayrulewidth
  \global\arrayrulewidth 1pt}\hline\noalign{\global\arrayrulewidth\savewidth}}
\newcommand{\tablestyle}[2]{\setlength{\tabcolsep}{#1}\renewcommand{\arraystretch}{#2}\centering\small}
\newcommand{\eqnsm}[2]{\begin{equation}\label{eq:#1}#2\end{equation}}
\newcommand{\tpm}[2]{\resizebox{4mm}{!}{\Large\raisebox{1mm}{$\genfrac{}{}{0pt}{}{+#1}{-#2}$}}}
\newcommand{\tss}[1]{\textsuperscript{#1}}

\makeatletter
\renewcommand\paragraph{\@startsection{paragraph}{4}{\z@}%
  {.1em \@plus1ex \@minus.1ex}%
  {-1em}%
  {\normalfont\normalsize\bfseries}}
\makeatother
\newtheorem{theorem}{Theorem}

\newcommand{\TP}{\mathit{TP}}
\newcommand{\FN}{\mathit{FN}}
\newcommand{\FP}{\mathit{FP}}
\newcommand{\IoU}{\text{IoU}}
\newcommand{\BN}{\mathbb N}
\newcommand{\SL}{{\cal L}}
\newcommand{\things}{\tss{Th}\xspace}
\newcommand{\stuff}{\tss{St}\xspace}

\begin{document}

\title{Panoptic Segmentation}
\author{
 Alexander Kirillov$^{1,2}$ \quad Kaiming He$^1$ \quad Ross Girshick$^1$
 \quad Carsten Rother$^2$ \quad Piotr Doll\'ar$^1$\\[2mm]
 $^1$Facebook AI Research (FAIR) \qquad $^2$HCI/IWR, Heidelberg University, Germany
}
\maketitle

\begin{abstract}
We propose and study a task we name \emph{panoptic segmentation} (PS). Panoptic segmentation unifies the typically distinct tasks of \emph{semantic segmentation} (assign a class label to each pixel) and \emph{instance segmentation} (detect and segment each object instance). The proposed task requires generating a \emph{coherent} scene segmentation that is rich and complete, an important step toward real-world vision systems. While early work in computer vision addressed related image/scene parsing tasks, these are not currently popular, possibly due to lack of appropriate metrics or associated recognition challenges. To address this, we propose a novel \emph{panoptic quality} (PQ) metric that captures performance for all classes (stuff and things) in an interpretable and unified manner. Using the proposed metric, we perform a rigorous study of both human and machine performance for PS on three existing datasets, revealing interesting insights about the task. The aim of our work is to revive the interest of the community in a more unified view of image segmentation.
\end{abstract}

\section{Introduction}\label{sec:intro}

\begin{figure}
\captionsetup[subfigure]{aboveskip=0mm,belowskip=.4mm}
\begin{subfigure}{0.495\linewidth}
 \includegraphics[width=1.0\linewidth,trim=2cm 0 1cm 0,clip]{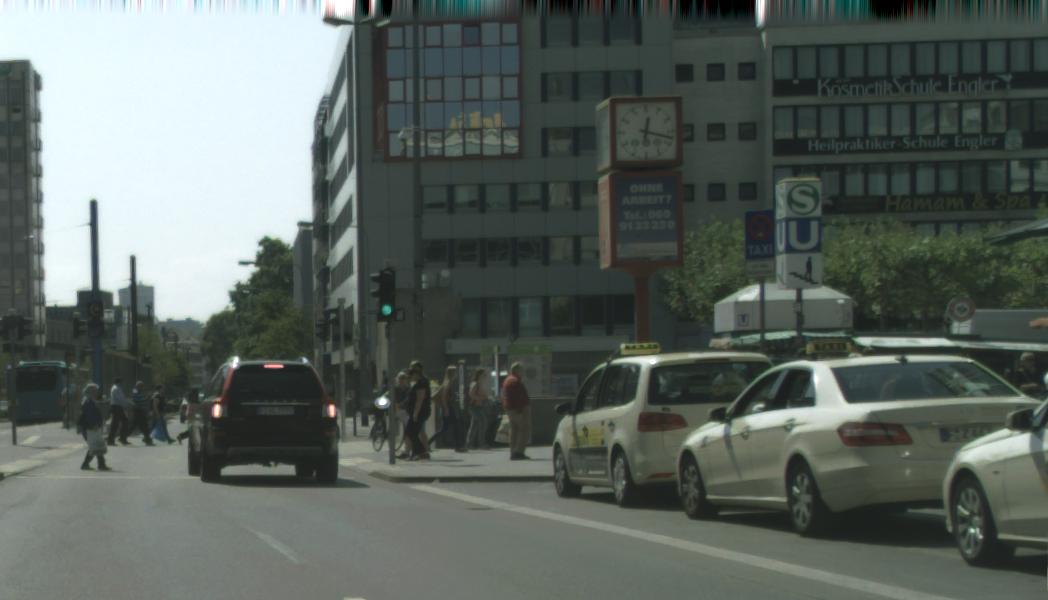}
 \subcaption{image}\label{fig:image}
\end{subfigure}
\begin{subfigure}{0.495\linewidth}
 \includegraphics[width=1.0\linewidth,trim=2cm 0 1cm 0,clip]{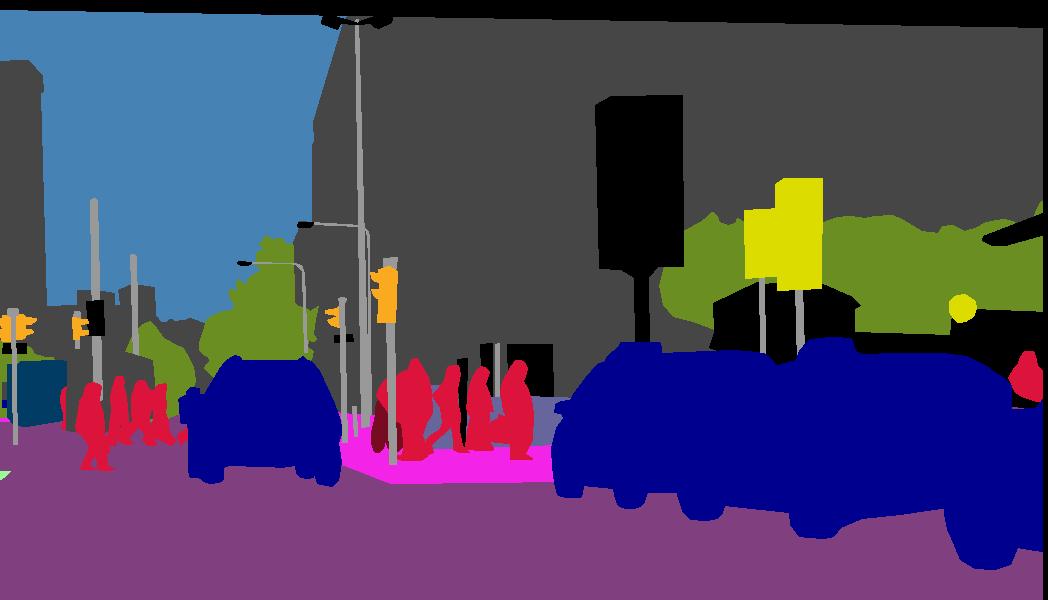}
 \subcaption{semantic segmentation}\label{fig:semantic}
\end{subfigure}\\
\begin{subfigure}{0.495\linewidth}
 \includegraphics[width=1.0\linewidth,trim=2cm 0 1cm 0,clip]{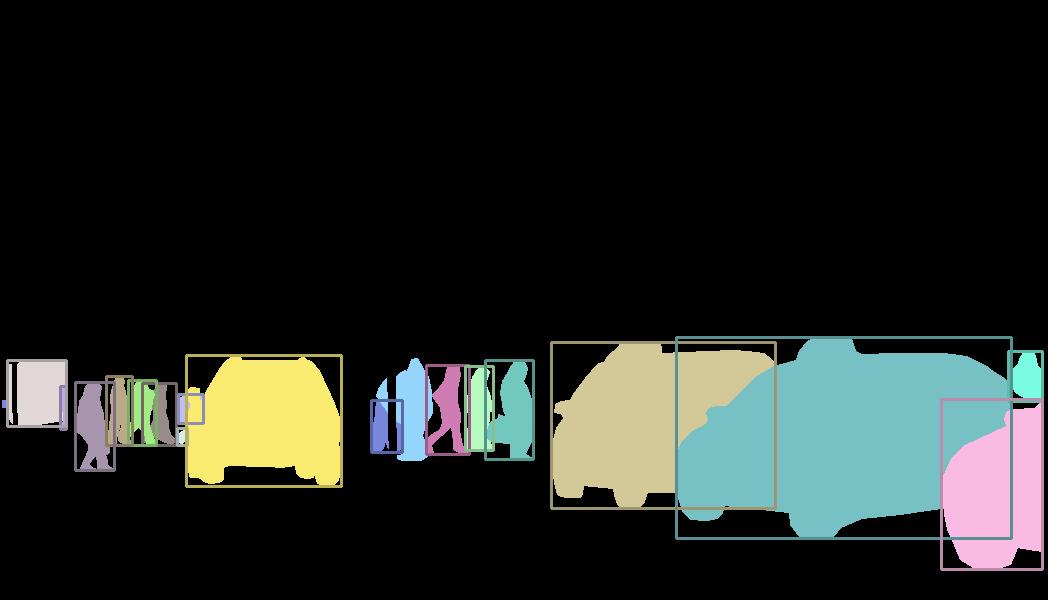}
 \subcaption{instance segmentation}\label{fig:instance}
\end{subfigure}
\begin{subfigure}{0.495\linewidth}
 \includegraphics[width=1.0\linewidth,trim=2cm 0 1cm 0,clip]{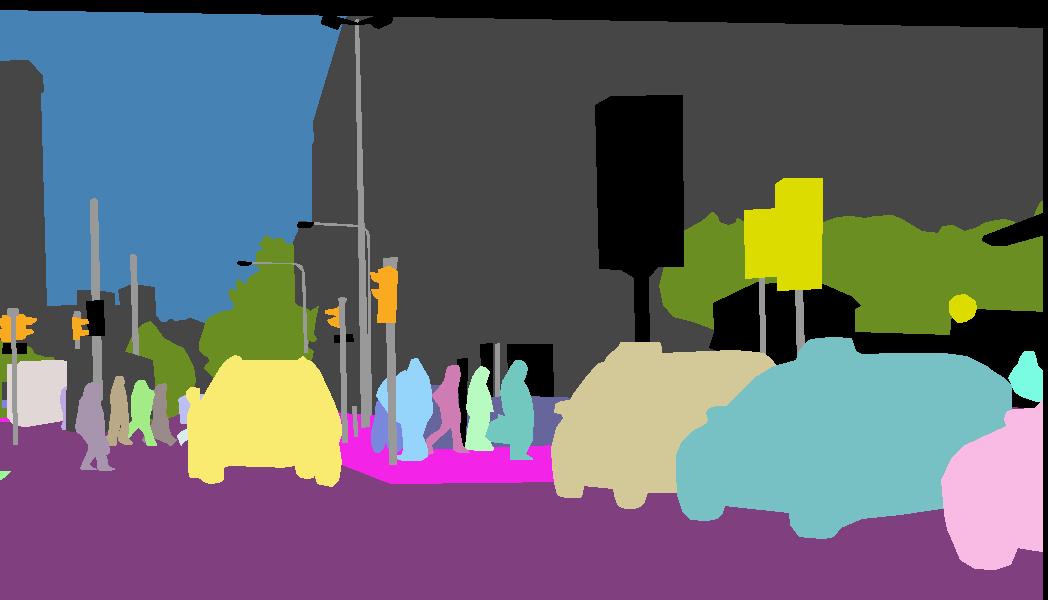}
 \subcaption{panoptic segmentation}\label{fig:panoptic}
\end{subfigure}
\caption{For a given (\subref{fig:image}) image, we show \emph{ground truth} for: (\subref{fig:semantic}) semantic segmentation (per-pixel class labels), (\subref{fig:instance}) instance segmentation (per-object mask and class label), and (\subref{fig:panoptic}) the proposed \emph{panoptic segmentation} task (per-pixel class+instance labels). The PS task: (1) encompasses both stuff and thing classes, (2) uses a simple but general format, and (3) introduces a uniform evaluation metric for all classes. Panoptic segmentation generalizes both semantic and instance segmentation and we expect the unified task will present novel challenges and enable innovative new methods.}
\label{fig:examples}
\end{figure}

In the early days of computer vision, \emph{things} -- countable objects such as people, animals, tools -- received the dominant share of attention. Questioning the wisdom of this trend, Adelson \cite{adelson2001seeing} elevated the importance of studying systems that recognize \emph{stuff} -- amorphous regions of similar texture or material such as grass, sky, road. This dichotomy between stuff and things persists to this day, reflected in both the division of visual recognition tasks and in the specialized algorithms developed for stuff and thing tasks.

Studying stuff is most commonly formulated as a task known as \emph{semantic segmentation}, see Figure \ref{fig:semantic}. As stuff is amorphous and uncountable, this task is defined as simply assigning a class label to each pixel in an image (note that semantic segmentation treats thing classes as stuff). In contrast, studying things is typically formulated as the task of \emph{object detection} or \emph{instance segmentation}, where the goal is to detect each object and delineate it with a bounding box or segmentation mask, respectively, see Figure \ref{fig:instance}. While seemingly related, the datasets, details, and metrics for these two visual recognition tasks vary substantially.

The schism between semantic and instance segmentation has led to a parallel rift in the methods for these tasks. Stuff classifiers are usually built on fully convolutional nets \cite{long2015fully} with dilations \cite{yu2015multi,chen2016deeplab} while object detectors often use object proposals \cite{Hosang2015proposals} and are region-based \cite{Ren2015a,he2017mask}. Overall algorithmic progress on these tasks has been incredible in the past decade, yet, something important may be overlooked by focussing on these tasks in isolation.

A natural question emerges: \emph{Can there be a reconciliation between stuff and things?} And what is the most effective design of a unified vision system that generates rich and coherent scene segmentations? These questions are particularly important given their relevance in real-world applications, such as autonomous driving or augmented reality.

Interestingly, while semantic and instance segmentation dominate current work, in the pre-deep learning era there was interest in the joint task described using various names such as \emph{scene parsing} \cite{tighe2014scene}, \emph{image parsing} \cite{tu2005image}, or \emph{holistic scene understanding} \cite{yao2012describing}. Despite its practical relevance, this general direction is not currently popular, perhaps due to lack of appropriate metrics or recognition challenges.

In our work we aim to revive this direction. We propose a task that: \emph{(1) encompasses both stuff and thing classes, (2) uses a simple but general output format, and (3) introduces a uniform evaluation metric}. To clearly disambiguate with previous work, we refer to the resulting task as \emph{panoptic segmentation} (PS). The definition of `panoptic' is ``including everything visible in one view'', in our context panoptic refers to a unified, global view of segmentation.

The \textbf{task format} we adopt for panoptic segmentation is simple: each pixel of an image must be assigned a semantic label and an instance id. Pixels with the same label and id belong to the same object; for stuff labels the instance id is ignored. See Figure \ref{fig:panoptic} for a visualization. This format has been adopted previously, especially by methods that produce non-overlapping instance segmentations \cite{kirillov2016instancecut,liu2017sgn,arnab2017pixelwise}. We adopt it for our joint task that includes stuff and things.

A fundamental aspect of panoptic segmentation is the \textbf{task metric} used for evaluation. While numerous existing metrics are popular for either semantic or instance segmentation, these metrics are best suited either for stuff or things, respectively, but not both. We believe that the use of disjoint metrics is one of the primary reasons the community generally studies stuff and thing segmentation in isolation. To address this, we introduce the {\em panoptic quality} (PQ) metric in \S\ref{sec:metric}. PQ is \emph{simple} and \emph{informative} and most importantly can be used to measure the performance for both stuff and things in a \emph{uniform} manner. Our hope is that the proposed joint metric will aid in the broader adoption of the joint task.

The panoptic segmentation task encompasses both semantic and instance segmentation but introduces new algorithmic challenges. Unlike semantic segmentation, it requires differentiating individual object instances; this poses a challenge for fully convolutional nets. Unlike instance segmentation, object segments must be \emph{non-overlapping}; this presents a challenge for region-based methods that operate on each object independently. Generating coherent image segmentations that resolve inconsistencies between stuff and things is an important step toward real-world uses.

As both the ground truth and algorithm format for PS must take on the same form, we can perform a detailed study of \emph{human consistency} on panoptic segmentation. This allows us to understand the PQ metric in more detail, including detailed breakdowns of recognition \vs segmentation and stuff \vs things performance. Moreover, measuring human PQ helps ground our understanding of machine performance. This is important as it will allow us to monitor performance saturations on various datasets for PS.

Finally we perform an initial study of machine performance for PS. To do so, we define a simple and likely suboptimal heuristic that combines the output of two \emph{independent} systems for semantic and instance segmentation via a series of post-processing steps that merges their outputs (in essence, a sophisticated form of non-maximum suppression). Our heuristic establishes a baseline for PS and gives us insights into the main algorithmic challenges it presents.

We study both human and machine performance on three popular segmentation datasets that have both stuff and things annotations. This includes the Cityscapes \cite{Cordts2016Cityscapes}, ADE20k \cite{zhou2017ade20k}, and Mapillary Vistas \cite{neuhold2017mapillary} datasets. For each of these datasets, we obtained results of state-of-the-art methods directly from the challenge organizers. In the future we will extend our analysis to COCO \cite{lin2014coco} on which stuff is being annotated \cite{caesar2016coco}. Together our results on these datasets form a solid foundation for the study of both human and machine performance on panoptic segmentation.

Both COCO \cite{lin2014coco} and Mapillary Vistas \cite{neuhold2017mapillary} featured the panoptic segmentation task as one of the tracks in their recognition challenges at ECCV 2018. We hope that having PS featured alongside the instance and semantic segmentation tracks on these popular recognition datasets will help lead to a broader adoption of the proposed joint task.

\section{Related Work}

Novel datasets and tasks have played a key role throughout the history of computer vision. They help catalyze progress and enable breakthroughs in our field, and just as importantly, they help us measure and recognize the progress our community is making. For example, ImageNet \cite{Russakovsky2015} helped drive the recent popularization of deep learning techniques for visual recognition \cite{Krizhevsky2012} and exemplifies the potential transformational power that datasets and tasks can have. Our goals for introducing the panoptic segmentation task are similar: to challenge our community, to drive research in novel directions, and to enable both expected and unexpected innovation. We review related tasks next.

\paragraph{Object detection tasks.} Early work on face detection using ad-hoc datasets (\eg, \cite{Lecun94,Viola2001}) helped popularize bounding-box object detection. Later, pedestrian detection datasets \cite{Dollar2012PAMI} helped drive progress in the field. The PASCAL VOC dataset \cite{everingham2015pascal} upgraded the task to a more diverse set of general object classes on more challenging images. More recently, the COCO dataset \cite{lin2014coco} pushed detection towards the task of instance segmentation. By framing this task and providing a high-quality dataset, COCO helped define a new and exciting research direction and led to many recent breakthroughs in instance segmentation \cite{pinheiro2015learning,li2016fully,he2017mask}. Our general goals for panoptic segmentation are similar.

\paragraph{Semantic segmentation tasks.} Semantic segmentation datasets have a rich history \cite{shotton2006textonboost,liu2011sift,everingham2015pascal} and helped drive key innovations (\eg, fully convolutional nets \cite{long2015fully} were developed using \cite{liu2011sift,everingham2015pascal}). These datasets contain both stuff and thing classes, but don't distinguish individual object instances. Recently the field has seen numerous new segmentation datasets including Cityscapes \cite{Cordts2016Cityscapes}, ADE20k \cite{zhou2017ade20k}, and Mapillary Vistas \cite{neuhold2017mapillary}. These datasets actually support both semantic and instance segmentation, and each has opted to have a separate track for the two tasks. Importantly, they contain all of the information necessary for PS. In other words, \emph{the panoptic segmentation task can be bootstrapped on these datasets without any new data collection.}

\paragraph{Multitask learning.} With the success of deep learning for many visual recognition tasks, there has been substantial interest in \emph{multitask learning} approaches that have broad competence and can solve multiple diverse vision problems in a single framework \cite{kokkinos2016ubernet,malik2016three,misra2016cross}. \Eg, UberNet \cite{kokkinos2016ubernet} solves multiple low to high-level visual tasks, including object detection and semantic segmentation, using a single network. While there is significant interest in this area, we emphasize that panoptic segmentation is \emph{not} a multitask problem but rather a single, \emph{unified} view of image segmentation. Specifically, the multitask setting allows for independent and potentially inconsistent outputs for stuff and things, while PS requires a single coherent scene segmentation.

\paragraph{Joint segmentation tasks.} In the pre-deep learning era, there was substantial interest in generating coherent scene interpretations. The seminal work on image parsing \cite{tu2005image} proposed a general bayesian framework to jointly model segmentation, detection, and recognition. Later, approaches based on graphical models studied consistent stuff and thing segmentation \cite{yao2012describing,tighe2013finding,tighe2014scene,sun2014relating}. While these methods shared a common motivation, there was no agreed upon task definition, and different output formats and varying evaluation metrics were used, including separate metrics for evaluating results on stuff and thing classes. In recent years this direction has become less popular, perhaps for these reasons.

In our work we aim to revive this general direction, but in contrast to earlier work, we focus on the task itself. Specifically, as discussed, PS: (1) addresses both stuff and thing classes, (2) uses a simple format, and (3) introduces a uniform metric for both stuff and things. Previous work on joint segmentation uses varying formats and disjoint metrics for evaluating stuff and things. Methods that generate non-overlapping instance segmentations \cite{kirillov2016instancecut,bai2016deep,liu2017sgn,arnab2017pixelwise} use the same format as PS, but these methods typically only address thing classes. By addressing both stuff and things, using a simple format, and introducing a uniform metric, we hope to encourage broader adoption of the joint task.

\paragraph{Amodal segmentation task.} In \cite{zhu2017amodal} objects are annotated \emph{amodally}: the full extent of each region is marked, not just the visible. Our work focuses on segmentation of all \emph{visible} regions, but an extension of panoptic segmentation to the amodal setting is an interesting direction for future work.

\section{Panoptic Segmentation Format}\label{sec:task}

\paragraph{Task format.} The format for panoptic segmentation is simple to define. Given a predetermined set of $L$ semantic classes encoded by $\SL := \{0, \ldots, L-1 \}$, the task requires a \emph{panoptic segmentation algorithm} to map each pixel $i$ of an image to a pair $(l_i, z_i) \in \SL \times \BN$, where $l_i$ represents the semantic class of pixel $i$ and $z_i$ represents its instance id. The $z_i$'s group pixels of the same class into distinct segments. Ground truth annotations are encoded identically. Ambiguous or out-of-class pixels can be assigned a special void label; \ie, not all pixels must have a semantic label.

\paragraph{Stuff and thing labels.} The semantic label set consists of subsets $\SL\stuff$ and $\SL\things$, such that $\SL = \SL\stuff \cup \SL\things$ and $\SL\stuff \cap \SL\things = \emptyset$. These subsets correspond to \emph{stuff} and \emph{thing} labels, respectively. When a pixel is labeled with $l_i \in \SL\stuff$, its corresponding instance id $z_i$ is irrelevant. That is, for stuff classes all pixels belong to the same instance (\eg, the same \emph{sky}). Otherwise, all pixels with the same $(l_i, z_i)$ assignment, where $l_i \in \SL\things$, belong to the same instance (\eg, the same \emph{car}), and conversely, all pixels belonging to a single instance must have the same $(l_i, z_i)$. The selection of which classes are stuff \vs things is a design choice left to the creator of the dataset, just as in previous datasets.

\paragraph{Relationship to semantic segmentation.} The PS task format is a strict generalization of the format for semantic segmentation. Indeed, both tasks require each pixel in an image to be assigned a semantic label. If the ground truth does not specify instances, or all classes are stuff, then the task formats are identical (although the task metrics differ). In addition, inclusion of thing classes, which may have multiple instances per image, differentiates the tasks.

\paragraph{Relationship to instance segmentation.} The instance segmentation task requires a method to segment each object instance in an image. However, it allows overlapping segments, whereas the panoptic segmentation task permits only one semantic label and one instance id to be assigned to each pixel. Hence, for PS, no overlaps are possible by construction. In the next section we show that this difference plays an important role in performance evaluation.

\paragraph{Confidence scores.} Like semantic segmentation, but unlike instance segmentation, we do \emph{not} require confidence scores associated with each segment for PS. This makes the panoptic task \emph{symmetric} with respect to humans and machines: both must generate the same type of image annotation. It also makes evaluating human consistency for PS simple. This is in contrast to instance segmentation, which is not easily amenable to such a study as human annotators do not provide explicit confidence scores (though a single precision/recall point may be measured). We note that confidence scores give downstream systems more information, which can be useful, so it may still be desirable to have a PS algorithm generate confidence scores in certain settings.

\section{Panoptic Segmentation Metric}\label{sec:metric}

In this section we introduce a new metric for panoptic segmentation. We begin by noting that existing metrics are specialized for either semantic or instance segmentation and cannot be used to evaluate the joint task involving both stuff and thing classes. Previous work on joint segmentation sidestepped this issue by evaluating stuff and thing performance using independent metrics (\eg \cite{yao2012describing,tighe2013finding,tighe2014scene,sun2014relating}). However, this introduces challenges in algorithm development, makes comparisons more difficult, and hinders communication. We hope that introducing a unified metric for stuff and things will encourage the study of the unified task.

Before going into further details, we start by identifying the following desiderata for a suitable metric for PS:

\textbf{Completeness.} The metric should treat stuff and thing classes in a uniform way, capturing all aspects of the task.

\textbf{Interpretability.} We seek a metric with identifiable meaning that facilitates communication and understanding.

\textbf{Simplicity.} In addition, the metric should be simple to define and implement. This improves transparency and allows for easy reimplementation. Related to this, the metric should be efficient to compute to enable rapid evaluation.

Guided by these principles, we propose a new \emph{panoptic quality} (PQ) metric. PQ measures the quality of a predicted panoptic segmentation relative to the ground truth. It involves two steps: (1) segment matching and (2) PQ computation given the matches. We describe each step next then return to a comparison to existing metrics.

\subsection{Segment Matching}\label{sec:matching}

We specify that a predicted segment and a ground truth segment can match only if their intersection over union (IoU) is strictly greater than 0.5. This requirement, together with the non-overlapping property of a panoptic segmentation, gives a \emph{unique matching}: there can be at most one predicted segment matched with each ground truth segment.
\begin{theorem} \label{thm:overlap}
Given a predicted and ground truth panoptic segmentation of an image, each ground truth segment can have at most one corresponding predicted segment with IoU strictly greater than 0.5 and vice verse.
\end{theorem}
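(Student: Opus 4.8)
The plan is to prove the contrapositive-flavored claim directly by contradiction: assume a single ground truth segment $g$ matches two distinct predicted segments $p_1$ and $p_2$, each with $\IoU > 0.5$, and derive a contradiction from the fact that $p_1$ and $p_2$ are disjoint (since a panoptic segmentation assigns each pixel a single label/instance pair, so predicted segments cannot overlap). The key quantitative observation is that $\IoU(p,g) > 0.5$ forces $p$ to cover more than half of $g$; more precisely, from $|p \cap g| / |p \cup g| > 1/2$ and $|p \cup g| \ge |g|$ I get $|p \cap g| > |g|/2$.

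First I would establish this lemma-like inequality: for any predicted segment $p$ with $\IoU(p,g) > 0.5$, we have $|p \cap g| > \tfrac{1}{2}|g|$. This follows because $|p \cup g| \geq |g|$, hence
\begin{equation}
|p \cap g| > \tfrac{1}{2}|p \cup g| \geq \tfrac{1}{2}|g|.
\end{equation}
Next, applying this to both $p_1$ and $p_2$, I would sum the two intersections to obtain $|p_1 \cap g| + |p_2 \cap g| > |g|$. Because $p_1$ and $p_2$ are non-overlapping predicted segments, the sets $p_1 \cap g$ and $p_2 \cap g$ are also disjoint subsets of $g$, so their sizes add to at most $|g|$, giving $|p_1 \cap g| + |p_2 \cap g| \leq |g|$. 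These two inequalities are contradictory, establishing that at most one predicted segment can match $g$.

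The ``vice versa'' direction is fully symmetric: the argument uses only that the two competing segments are disjoint and that each captures more than half of the common segment's area. Since a panoptic ground truth is itself non-overlapping, swapping the roles of predicted and ground truth segments yields that each predicted segment matches at most one ground truth segment by an identical computation.

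I do not expect a genuine obstacle here, as the result is an elementary counting argument; the only point requiring care is to state explicitly where the non-overlap hypothesis enters — namely, that it is precisely what guarantees $p_1 \cap g$ and $p_2 \cap g$ are disjoint, which is what makes the $\IoU > 0.5$ threshold (rather than some larger constant) sufficient for uniqueness. It is worth emphasizing that the strict inequality in the threshold is essential: with $\IoU = 0.5$ exactly, one could have two segments each covering exactly half of $g$, so the strictness is what rules out ties.
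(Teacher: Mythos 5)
Your proof is correct and follows essentially the same argument as the paper: both rely on $|p\cup g|\ge|g|$ together with the disjointness of $p_1$ and $p_2$ forcing $|p_1\cap g|+|p_2\cap g|\le|g|$, and both handle the converse direction by symmetry. The only cosmetic difference is that the paper phrases the conclusion as $\IoU(p_1,g)+\IoU(p_2,g)\le 1$ rather than as a contradiction with each intersection exceeding $|g|/2$, which is an equivalent rearrangement.
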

\begin{proof}
Let $g$ be a ground truth segment and $p_1$ and $p_2$ be two predicted segments. By definition, $p_1 \cap p_2 = \emptyset$ (they do not overlap). Since $|p_i \cup g|\ge|g|$, we get the following:
\begin{align} \nonumber
\IoU(p_i, g) = \frac{|p_i \cap g|}{|p_i \cup g|} \le \frac{|p_i \cap g|}{|g|} \quad \text{for} \, i \in \{1,2\} \,.
\end{align}
Summing over $i$, and since $|p_1 \cap g| + |p_2 \cap g| \le |g|$ due to the fact that $p_1 \cap p_2 = \emptyset$, we get:
\begin{align} \nonumber
\IoU(p_1, g) + \IoU(p_2, g) \le \frac{|p_1 \cap g| + |p_2 \cap g|}{|g|} \le 1 \,.
\end{align}
Therefore, if $\IoU(p_1, g) > 0.5$, then $\IoU(p_2, g)$ has to be smaller than 0.5. Reversing the role of $p$ and $g$ can be used to prove that only one ground truth segment can have IoU with a predicted segment strictly greater than 0.5.
\end{proof}

The requirement that matches must have IoU greater than 0.5, which in turn yields the unique matching theorem, achieves two of our desired properties. First, it is \emph{simple} and efficient as correspondences are unique and trivial to obtain. Second, it is \emph{interpretable} and easy to understand (and does not require solving a complex matching problem as is commonly the case for these types of metrics \cite{hariharan2014simultaneous,yang2012layered}).

Note that due to the uniqueness property, for IoU $>$ 0.5, any reasonable matching strategy (including greedy and optimal) will yield an identical matching. For smaller IoU other matching techniques would be required; however, in the experiments we will show that lower thresholds are unnecessary as matches with IoU $\le$ 0.5 are rare in practice.

\subsection{PQ Computation}

\begin{figure}[t]
\centering
\includegraphics[width=1\linewidth]{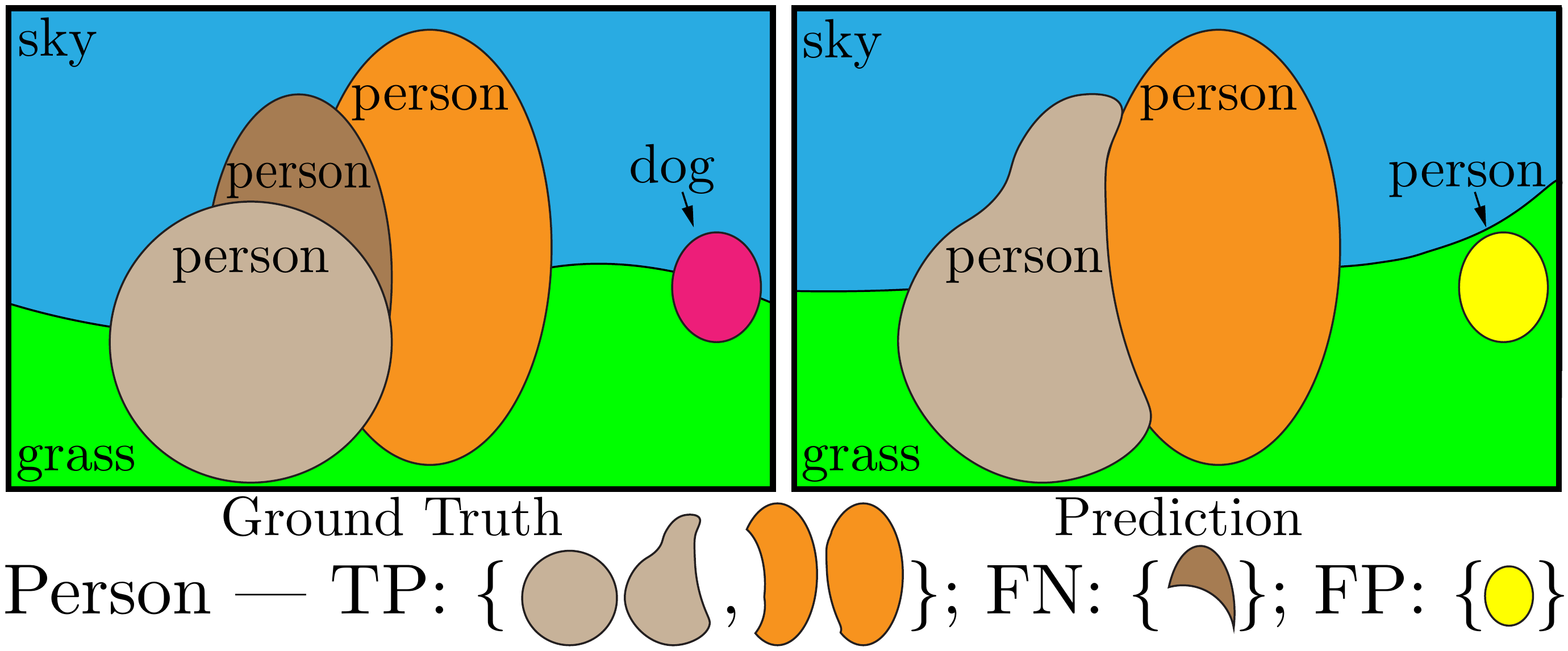}
\caption{Toy illustration of ground truth and predicted panoptic segmentations of an image. Pairs of segments of the same color have IoU larger than 0.5 and are therefore matched. We show how the segments for the \emph{person} class are partitioned into true positives $\TP$, false negatives $\FN$, and false positives $\FP$.}
\label{fig:psq-overview}\vspace{-1mm}
\end{figure}

We calculate PQ for each class independently and average over classes. This makes PQ insensitive to class imbalance. For each class, the unique matching splits the predicted and ground truth segments into three sets: true positives ($\TP$), false positives ($\FP$), and false negatives ($\FN$), representing matched pairs of segments, unmatched predicted segments, and unmatched ground truth segments, respectively. An example is illustrated in Figure~\ref{fig:psq-overview}. Given these three sets, PQ is defined as:
 \eqnsm{psq}{\text{PQ} = \frac{\sum_{(p, g) \in \TP} \text{IoU}(p, g)}{|\TP| + \frac{1}{2}|\FP| + \frac{1}{2}|\FN|} \,.}
PQ is intuitive after inspection: $\frac{1}{|\TP|} \sum_{(p, g) \in \TP} \IoU(p, g)$ is simply the average IoU of matched segments, while $\frac{1}{2} |\FP| + \frac{1}{2} |\FN|$ is added to the denominator to penalize segments without matches. Note that all segments receive equal importance regardless of their area. Furthermore, if we multiply and divide PQ by the size of the $\TP$ set, then PQ can be seen as the multiplication of a \emph{segmentation quality} (SQ) term and a \emph{recognition quality} (RQ) term:
 \eqnsm{psq-seg-det}{\small{\text{PQ}} = \underbrace{\frac{\sum_{(p, g) \in \TP} \text{IoU}(p, g)}{\vphantom{\frac{1}{2}}|\TP|}}_{\text{segmentation quality (SQ)}} \times \underbrace{\frac{|\TP|}{|\TP| + \frac{1}{2} |\FP| + \frac{1}{2} |\FN|}}_{\text{recognition quality (RQ) }} \,.}
Written this way, RQ is the familiar $F_1$ score \cite{van1979information} widely used for quality estimation in detection settings \cite{martin2004learning}. SQ is simply the average IoU of matched segments. We find the decomposition of PQ = SQ $\times$ RQ to provide insight for analysis. We note, however, that the two values are not independent since SQ is measured only over matched segments.

Our definition of PQ achieves our desiderata. It measures performance of all classes in a uniform way using a simple and interpretable formula. We conclude by discussing how we handle void regions and groups of instances \cite{lin2014coco}.

\textbf{Void labels.} There are two sources of void labels in the ground truth: (a) out of class pixels and (b) ambiguous or unknown pixels. As often we cannot differentiate these two cases, we don't evaluate predictions for void pixels. Specifically: (1) during matching, all pixels in a predicted segment that are labeled as void in the ground truth are removed from the prediction and do not affect IoU computation, and (2) after matching, unmatched predicted segments that contain a fraction of void pixels over the matching threshold are removed and do not count as false positives. Finally, outputs may also contain void pixels; these do not affect evaluation.

\textbf{Group labels.} A common annotation practice \cite{Cordts2016Cityscapes, lin2014coco} is to use a group label instead of instance ids for adjacent instances of the same semantic class if accurate delineation of each instance is difficult. For computing PQ: (1) during matching, group regions are not used, and (2) after matching, unmatched predicted segments that contain a fraction of pixels from a group of the same class over the matching threshold are removed and do not count as false positives.

\subsection{Comparison to Existing Metrics}

We conclude by comparing PQ to existing metrics for semantic and instance segmentation.

\paragraph{Semantic segmentation metrics.} Common metrics for semantic segmentation include pixel accuracy, mean accuracy, and IoU \cite{long2015fully}. These metrics are computed based only on pixel outputs/labels and completely ignore object-level labels. For example, IoU is the ratio between correctly predicted pixels and total number of pixels in either the prediction or ground truth for each class. As these metrics ignore instance labels, they are not well suited for evaluating thing classes. Finally, please note that IoU for semantic segmentation is distinct from our segmentation quality (SQ), which is computed as the average IoU over \emph{matched segments}.

\paragraph{Instance segmentation metrics.} The standard metric for instance segmentation is Average Precision (AP) \cite{lin2014coco,hariharan2014simultaneous}. AP requires each object segment to have a confidence score to estimate a precision/recall curve. Note that while confidence scores are quite natural for object detection, they are not used for semantic segmentation. Hence, AP cannot be used for measuring the output of semantic segmentation, or likewise of PS (see also the discussion of confidences in \S\ref{sec:task}).

\paragraph{Panoptic quality.} PQ treats all classes (stuff and things) in a uniform way. We note that while decomposing PQ into SQ and RQ is helpful with interpreting results, PQ is \emph{not} a combination of semantic and instance segmentation metrics. Rather, SQ and RQ are computed for every class (stuff and things), and measure segmentation and recognition quality, respectively. PQ thus unifies evaluation over all classes. We support this claim with rigorous experimental evaluation of PQ in \S\ref{sec:machines}, including comparisons to IoU and AP for semantic and instance segmentation, respectively.

\section{Panoptic Segmentation Datasets}

To our knowledge only three public datasets have both dense semantic and instance segmentation annotations: Cityscapes \cite{Cordts2016Cityscapes}, ADE20k \cite{zhou2017ade20k}, and Mapillary Vistas \cite{neuhold2017mapillary}. We use all three datasets for panoptic segmentation. In addition, in the future we will extend our analysis to COCO \cite{lin2014coco} on which stuff has been recently annotated \cite{caesar2016coco}\footnote{COCO instance segmentations contain overlaps. We collected depth ordering for all pairs of overlapping instances in COCO to resolve these overlaps: {\fontsize{7pt}{1em}\url{http://cocodataset.org/\#panoptic-2018}}.}.

\textbf{Cityscapes} \cite{Cordts2016Cityscapes} has 5000 images (2975 train, 500 val, and 1525 test) of ego-centric driving scenarios in urban settings. It has dense pixel annotations (97\% coverage) of 19 classes among which 8 have instance-level segmentations.

\textbf{ADE20k} \cite{zhou2017ade20k} has over 25k images (20k train, 2k val, 3k test) that are densely annotated with an open-dictionary label set. For the 2017 Places Challenge\footnote{\fontsize{7pt}{1em}\url{http://placeschallenge.csail.mit.edu}}, 100 thing and 50 stuff classes that cover 89\% of all pixels are selected. We use this closed vocabulary in our study.

\textbf{Mapillary Vistas} \cite{neuhold2017mapillary} has 25k street-view images (18k train, 2k val, 5k test) in a wide range of resolutions. The `research edition' of the dataset is densely annotated (98\% pixel coverage) with 28 stuff and 37 thing classes.

\section{Human Consistency Study}\label{sec:human}

One advantage of panoptic segmentation is that it enables measuring human annotation consistency. Aside from this being interesting as an end in itself, human consistency studies allow us to understand the task in detail, including details of our proposed metric and breakdowns of human consistency along various axes. This gives us insight into intrinsic challenges posed by the task without biasing our analysis by algorithmic choices. Furthermore, human studies help ground machine performance (discussed in \S\ref{sec:machines}) and allow us to calibrate our understanding of the task.

\begin{figure}
\begin{subfigure}{1.0\linewidth}
\centering
 \includegraphics[width=0.32\linewidth]{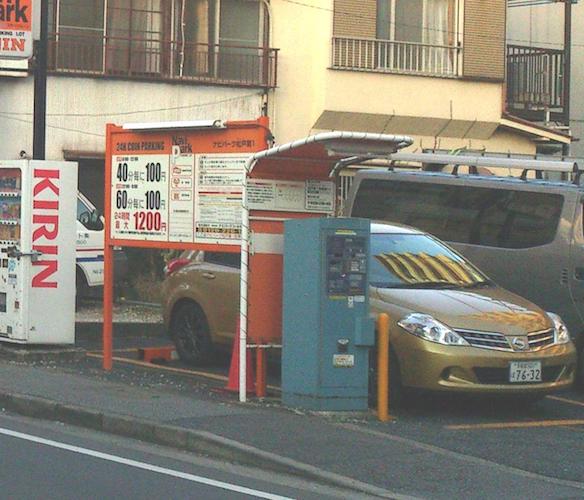}
 \includegraphics[width=0.32\linewidth]{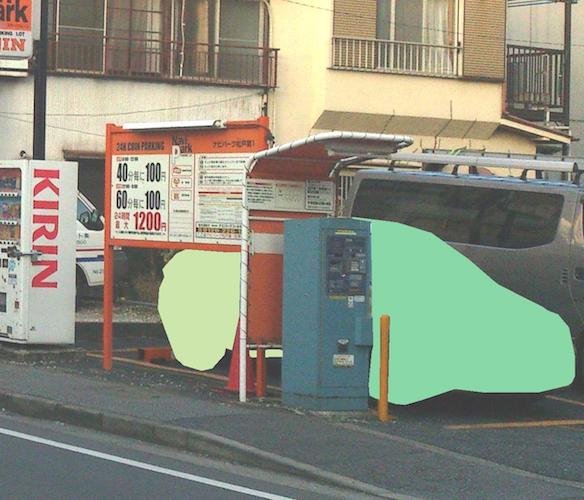}
 \includegraphics[width=0.32\linewidth]{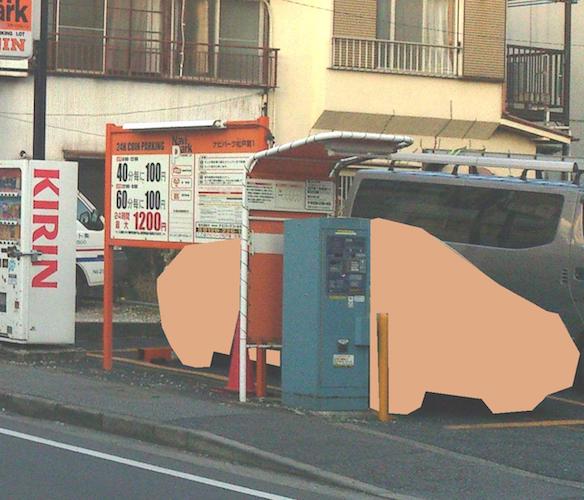}
\end{subfigure}\\[1pt]
\begin{subfigure}{1.0\linewidth}
\centering
 \includegraphics[width=0.32\linewidth]{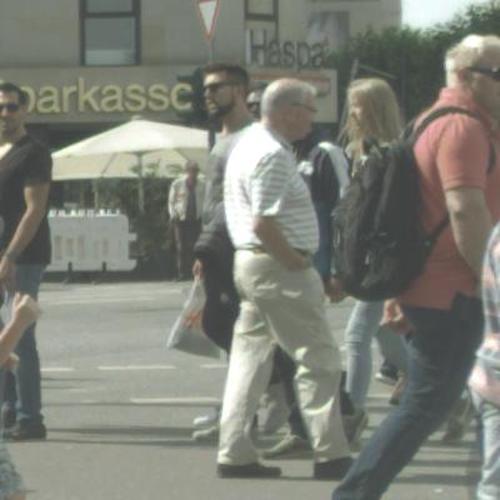}
 \includegraphics[width=0.32\linewidth]{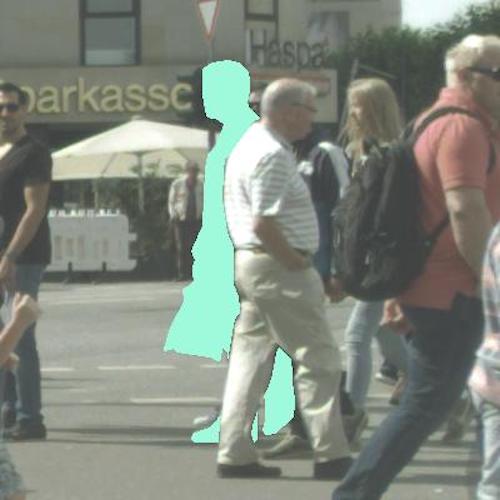}
 \includegraphics[width=0.32\linewidth]{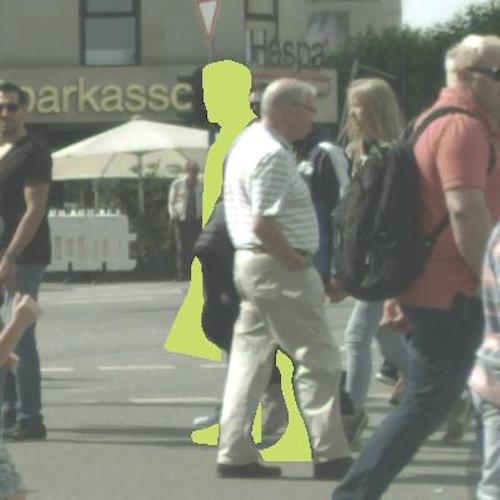}
\end{subfigure}\vspace{-2mm}
\caption{\textbf{Segmentation flaws.} Images are zoomed and cropped. Top row (Vistas image): both annotators identify the object as a car, however, one splits the car into two cars. Bottom row (Cityscapes image): the segmentation is genuinely ambiguous.}\vspace{-1mm}
\label{fig:segm-flaws}
\end{figure}

\begin{table}
\resizebox{1\linewidth}{!}{\tablestyle{4pt}{1.1}
\begin{tabular}{@{}l|ccc|ccc|ccc@{}}
 & PQ & PQ\stuff & PQ\things\ & SQ & SQ\stuff & SQ\things\ & RQ & RQ\stuff & RQ\things\\\shline
 Cityscapes & 69.7 & 71.3 & 67.4 & 84.2 & 84.4 & 83.9 & 82.1 & 83.4 & 80.2\\
 ADE20k     & 67.1 & 70.3 & 65.9 & 85.8 & 85.5 & 85.9 & 78.0 & 82.4 & 76.4\\
 Vistas     & 57.5 & 62.6 & 53.4 & 79.5 & 81.6 & 77.9 & 71.4 & 76.0 & 67.7\\
\end{tabular}}
\caption{\textbf{Human consistency for stuff \vs things}. Panoptic, segmentation, and recognition quality (PQ, SQ, RQ) averaged over classes (PQ=SQ$\times$RQ per class) are reported as percentages. Perhaps surprisingly, we find that human consistency on each dataset is relatively similar for both stuff and things.}
\label{tab:stuff-things}
\end{table}

\paragraph{Human annotations.} To enable human consistency analysis, dataset creators graciously supplied us with 30 doubly annotated images for Cityscapes, 64 for ADE20k, and 46 for Vistas. For Cityscapes and Vistas, the images are annotated independently by different annotators. ADE20k is annotated by a single well-trained annotator who labeled the same set of images with a gap of six months. To measure panoptic quality (PQ) for human annotators, we treat one annotation for each image as ground truth and the other as the prediction. Note that the PQ is symmetric \wrt the ground truth and prediction, so order is unimportant.

\paragraph{Human consistency.} First, Table~\ref{tab:stuff-things} shows human consistency on each dataset, along with the decomposition of PQ into segmentation quality (SQ) and recognition quality (RQ). As expected, humans are not perfect at this task, which is consistent with studies of annotation quality from \cite{Cordts2016Cityscapes,zhou2017ade20k,neuhold2017mapillary}. Visualizations of human segmentation and classification errors are shown in Figures \ref{fig:segm-flaws} and \ref{fig:class-flaws}, respectively.

We note that Table~\ref{tab:stuff-things} establishes a measure of annotator agreement on each dataset, \emph{not} an upper bound on human consistency. We further emphasize that numbers are not comparable across datasets and should not be used to assess dataset quality. The number of classes, percent of annotated pixels, and scene complexity vary across datasets, each of which significantly impacts annotation difficulty.

\paragraph{Stuff \vs things.} PS requires segmentation of both stuff and things. In Table~\ref{tab:stuff-things} we also show PQ\stuff and PQ\things which is the PQ averaged over stuff classes and thing classes, respectively. For Cityscapes and ADE20k human consistency for stuff and things are close, on Vistas the gap is a bit larger. Overall, this implies stuff and things have similar difficulty, although thing classes are somewhat harder. In Figure~\ref{fig:psq-per-class} we show PQ for every class in each dataset, sorted by PQ. Observe that stuff and things classes distribute fairly evenly. This implies that the proposed metric strikes a good balance and, indeed, is successful at unifying the stuff and things segmentation tasks without either dominating the error.

\begin{figure}
\begin{subfigure}{1.0\linewidth}
\centering
 \includegraphics[width=0.32\linewidth]{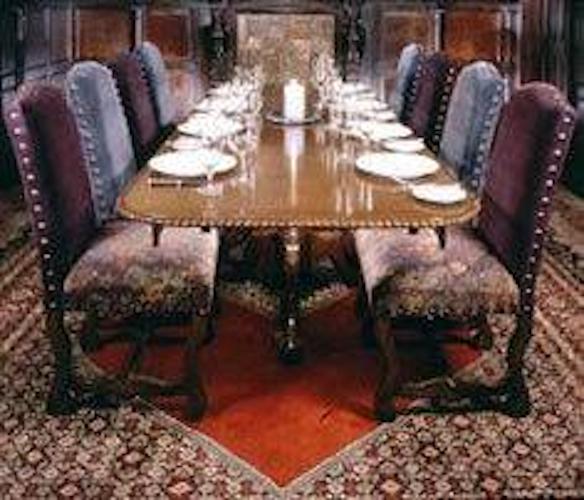}
 \includegraphics[width=0.32\linewidth]{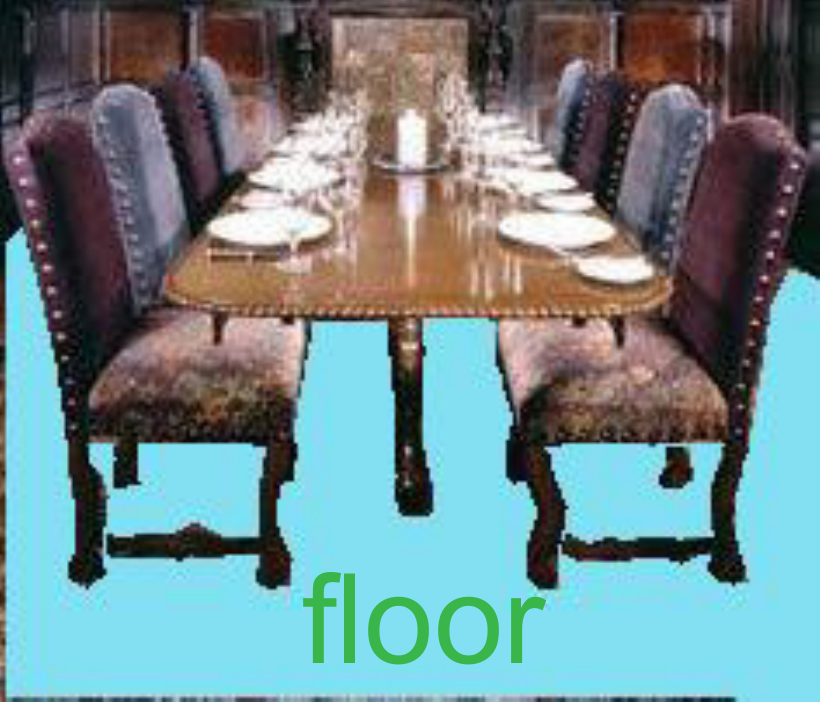}
 \includegraphics[width=0.32\linewidth]{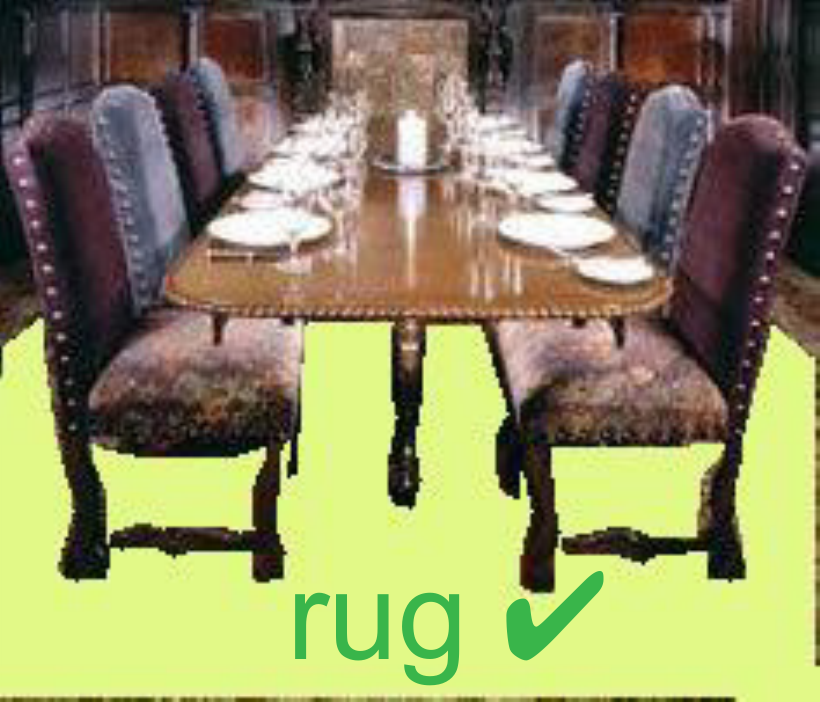}
\end{subfigure}\\[1pt]
\begin{subfigure}{1.0\linewidth}
\centering
 \includegraphics[width=0.32\linewidth]{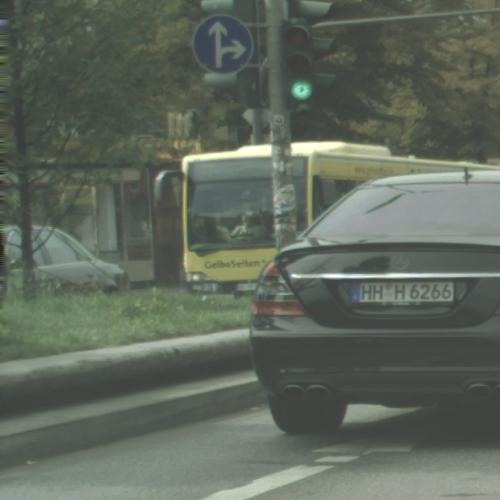}
 \includegraphics[width=0.32\linewidth]{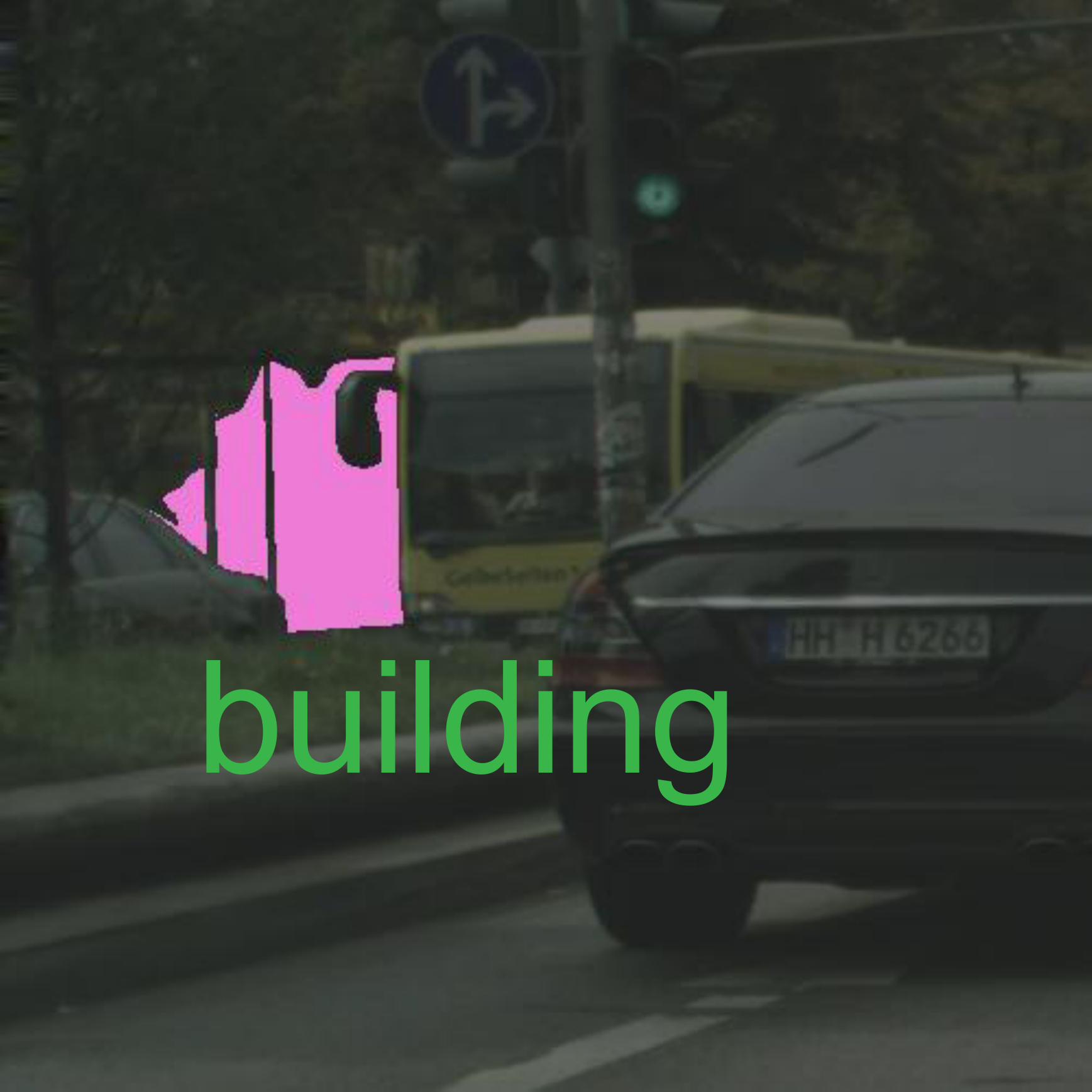}
 \includegraphics[width=0.32\linewidth]{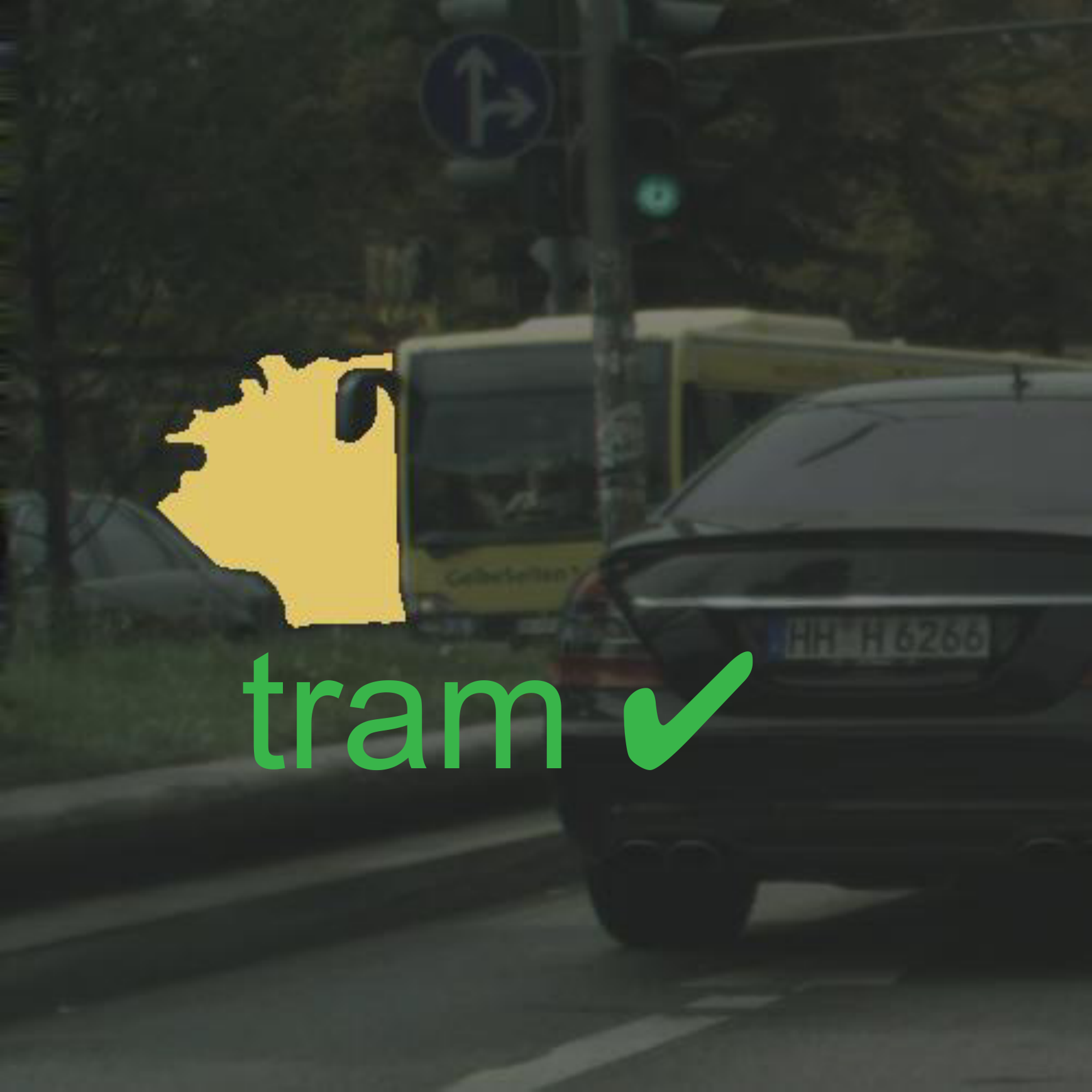}
\end{subfigure}\vspace{-2mm}
\caption{\textbf{Classification flaws.} Images are zoomed and cropped. Top row (ADE20k image): simple misclassification. Bottom row (Cityscapes image): the scene is extremely difficult, tram is the correct class for the segment. Many errors are difficult to resolve.}\vspace{-1mm}
\label{fig:class-flaws}
\end{figure}

\begin{table}
\resizebox{1\linewidth}{!}{\tablestyle{4pt}{1.1}
\begin{tabular}{@{}l|ccc|ccc|ccc@{}}
 & PQ\tss{S} & PQ\tss{M} & PQ\tss{L} & SQ\tss{S} & SQ\tss{M} & SQ\tss{L} & RQ\tss{S} & RQ\tss{M} & RQ\tss{L}\\\shline
 Cityscapes & 35.1 & 62.3 & 84.8 & 67.8 & 81.0 & 89.9 & 51.5 & 76.5 & 94.1 \\
 ADE20k     & 49.9 & 69.4 & 79.0 & 78.0 & 84.0 & 87.8 & 64.2 & 82.5 & 89.8 \\
 Vistas     & 35.6 & 47.7 & 69.4 & 70.1 & 76.6 & 83.1 & 51.5 & 62.3 & 82.6
\end{tabular}}
\caption{\textbf{Human consistency \vs scale}, for small (S), medium (M) and large (L) objects. Scale plays a large role in determining human consistency for panoptic segmentation. On large objects both SQ and RQ are above 80 on all datasets, while for small objects RQ drops precipitously. SQ for small objects is quite reasonable.}
\label{tab:scale}
\end{table}

\paragraph{Small \vs large objects.} To analyze how PQ varies with object size we partition the datasets into small (S), medium (M), and large (L) objects by considering the smallest $25\%$, middle $50\%$, and largest $25\%$ of objects in each dataset, respectively. In Table~\ref{tab:scale}, we see that for large objects human consistency for all datasets is quite good. For small objects, RQ drops significantly implying human annotators often have a hard time finding small objects. However, if a small object is found, it is segmented relatively well.

\paragraph{IoU threshold.} By enforcing an overlap greater than 0.5 IoU, we are given a unique matching by Theorem ~\ref{thm:overlap}. However, is the 0.5 threshold reasonable? An alternate strategy is to use no threshold and perform the matching by solving a maximum weighted bipartite matching problem \cite{west2001introduction}. The optimization will return a matching that maximizes the sum of IoUs of the matched segments. We perform the matching using this optimization and plot the cumulative density functions of the match overlaps in Figure~\ref{fig:matching_cdf}. Less than 16\% of the matches have IoU overlap less than 0.5, indicating that relaxing the threshold should have minor effect.

To verify this intuition, in Figure~\ref{fig:iou-threshold} we show PQ computed for different IoU thresholds. Notably, the difference in PQ for IoU of 0.25 and 0.5 is relatively small, especially compared to the gap between IoU of 0.5 and 0.75, where the change in PQ is larger. Furthermore, many matches at lower IoU are false matches. Therefore, given that the matching for IoU of 0.5 is not only unique, but also simple and intuitive, we believe that the default choice of 0.5 is reasonable.

\begin{figure}[t]
\includegraphics[width=1.0\linewidth]{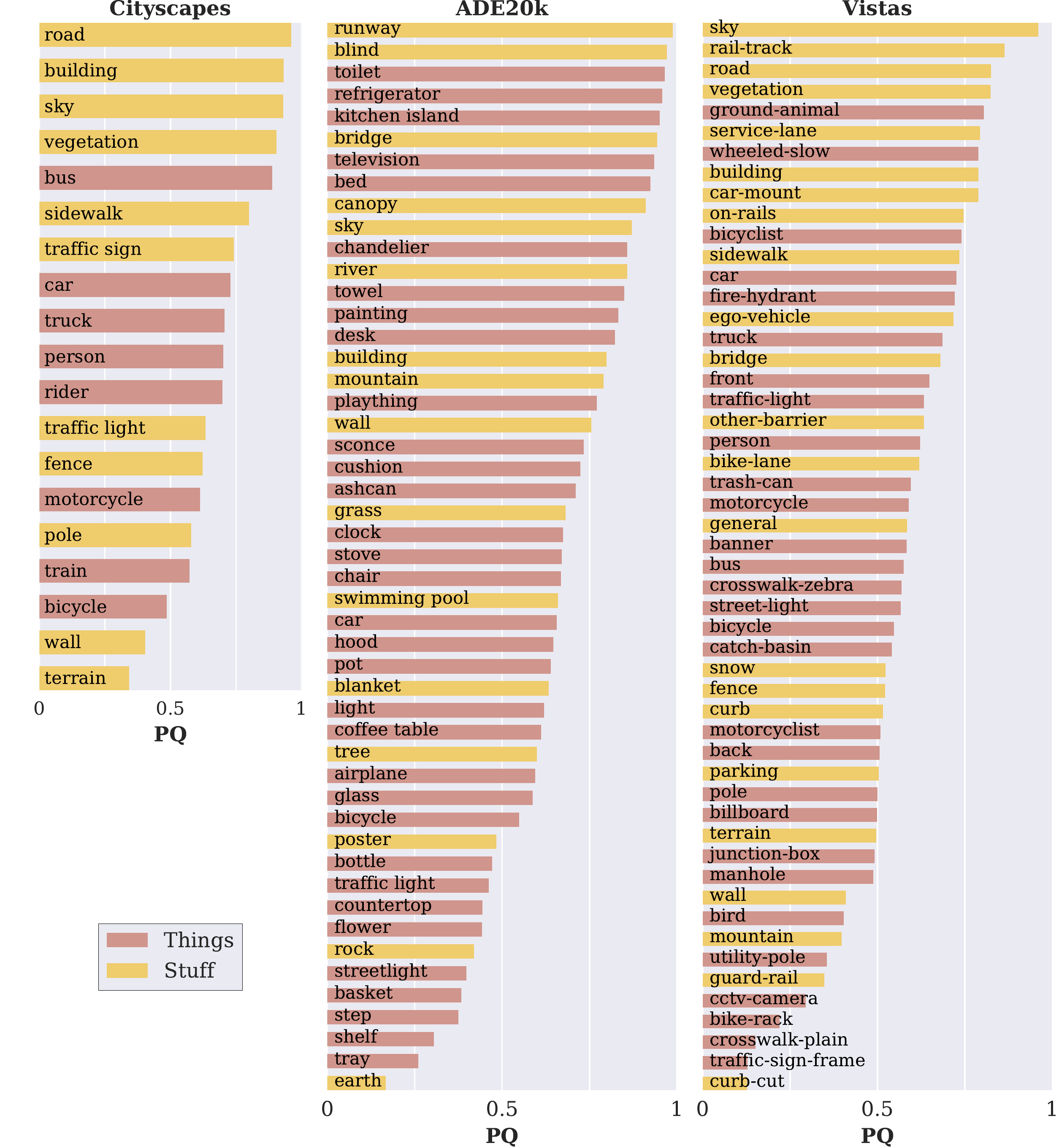}
\caption{\textbf{Per-Class Human consistency, sorted by PQ}. Thing classes are shown in red, stuff classes in orange (for ADE20k every other class is shown, classes without matches in the dual-annotated tests sets are omitted). Things and stuff are distributed fairly evenly, implying PQ balances their performance.}
\label{fig:psq-per-class}
\end{figure}

\begin{figure}[t]
\includegraphics[width=1.0\linewidth]{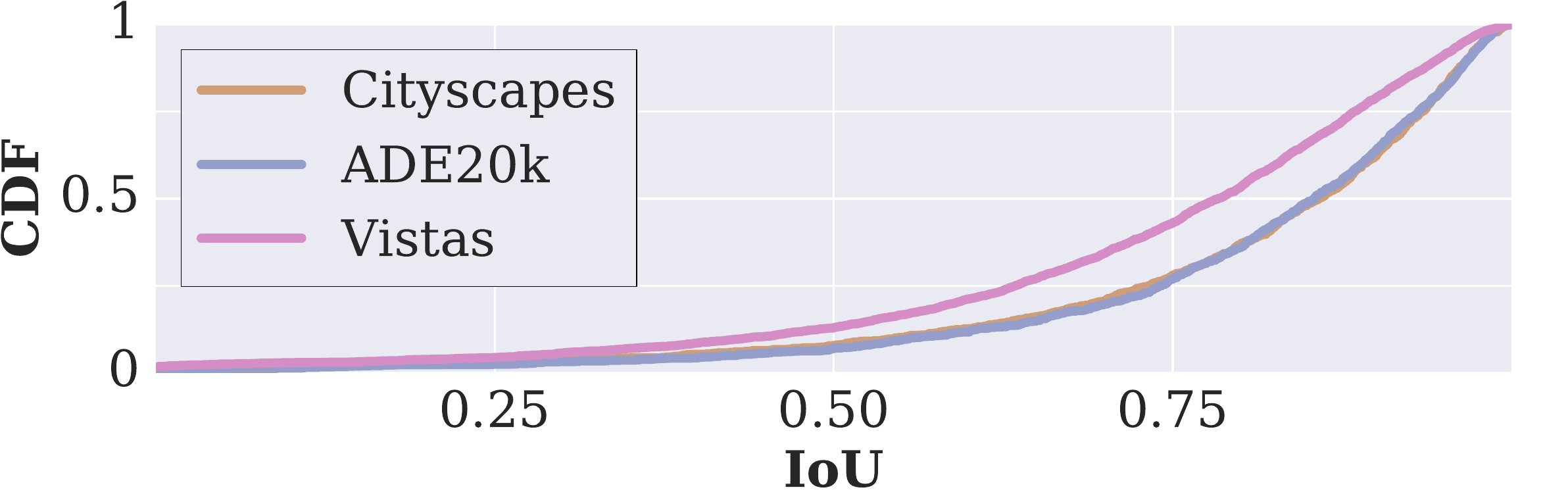}
\caption{\textbf{Cumulative density functions of overlaps} for matched segments in three datasets when matches are computed by solving a maximum weighted bipartite matching problem \cite{west2001introduction}. After matching, less than 16\% of matched objects have IoU below 0.5.}
\label{fig:matching_cdf}
\end{figure}

\begin{figure}[t]
\includegraphics[width=1.0\linewidth]{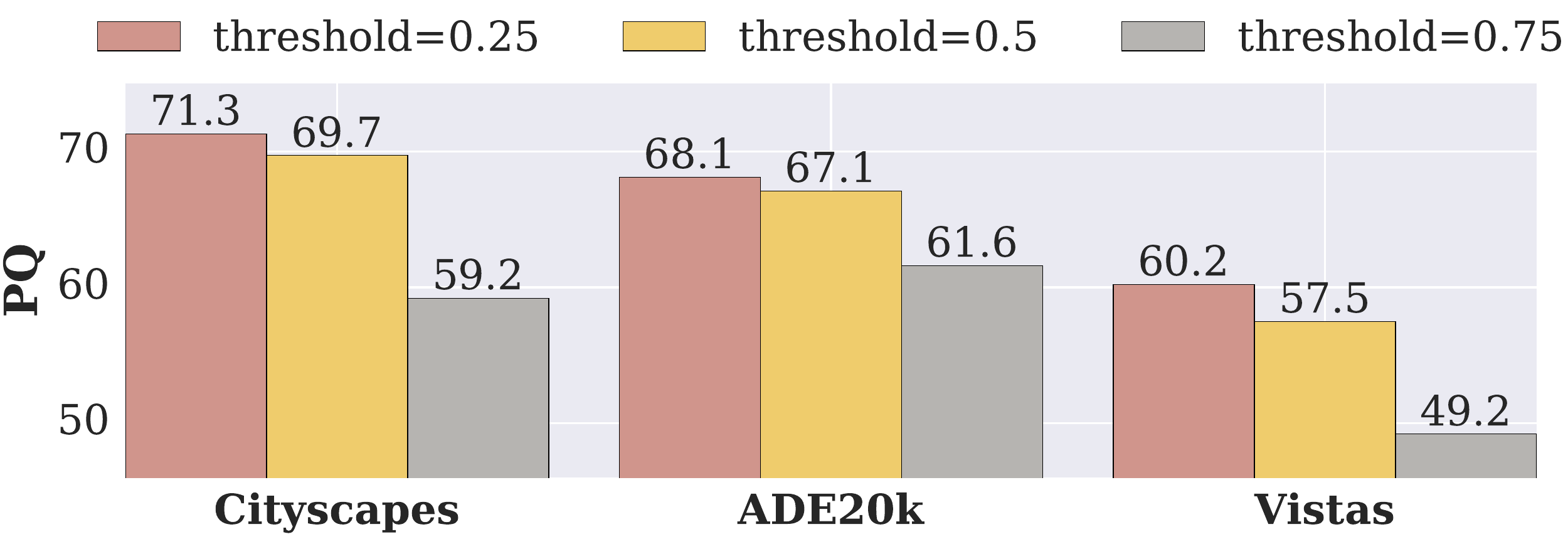}
\caption{\textbf{Human consistency for different IoU thresholds.} The difference in PQ using a matching threshold of 0.25 \vs 0.5 is relatively small. For IoU of 0.25 matching is obtained by solving a maximum weighted bipartite matching problem. For a threshold greater than 0.5 the matching is unique and much easier to obtain.}
\label{fig:iou-threshold}
\end{figure}

\paragraph{SQ \vs RQ balance.} Our RQ definition is equivalent to the $F_1$ score. However, other choices are possible. Inspired by the generalized $F_\beta$ score \cite{van1979information}, we can introduce a parameter $\alpha$ that enables tuning the penalty for recognition errors:
 \eqnsm{alpha}{\text{RQ}^\alpha = \frac{|\TP|}{|\TP| + \alpha |\FP| + \alpha |\FN|} \,.}
By default $\alpha$ is 0.5. Lowering $\alpha$ reduces the penalty of unmatched segments and thus increases RQ (SQ is not affected). Since PQ=SQ$\times$RQ, this changes the relative effect of PS \vs RQ on the final PQ metric. In Figure~\ref{fig:alpha-balance} we show SQ and RQ for various $\alpha$. The default $\alpha$ strikes a good balance between SQ and RQ. In principle, altering $\alpha$ can be used to balance the influence of segmentation and recognition errors on the final metric. In a similar spirit, one could also add a parameter $\beta$ to balance influence of FPs \vs FNs.

\begin{figure}
\includegraphics[width=1.0\linewidth]{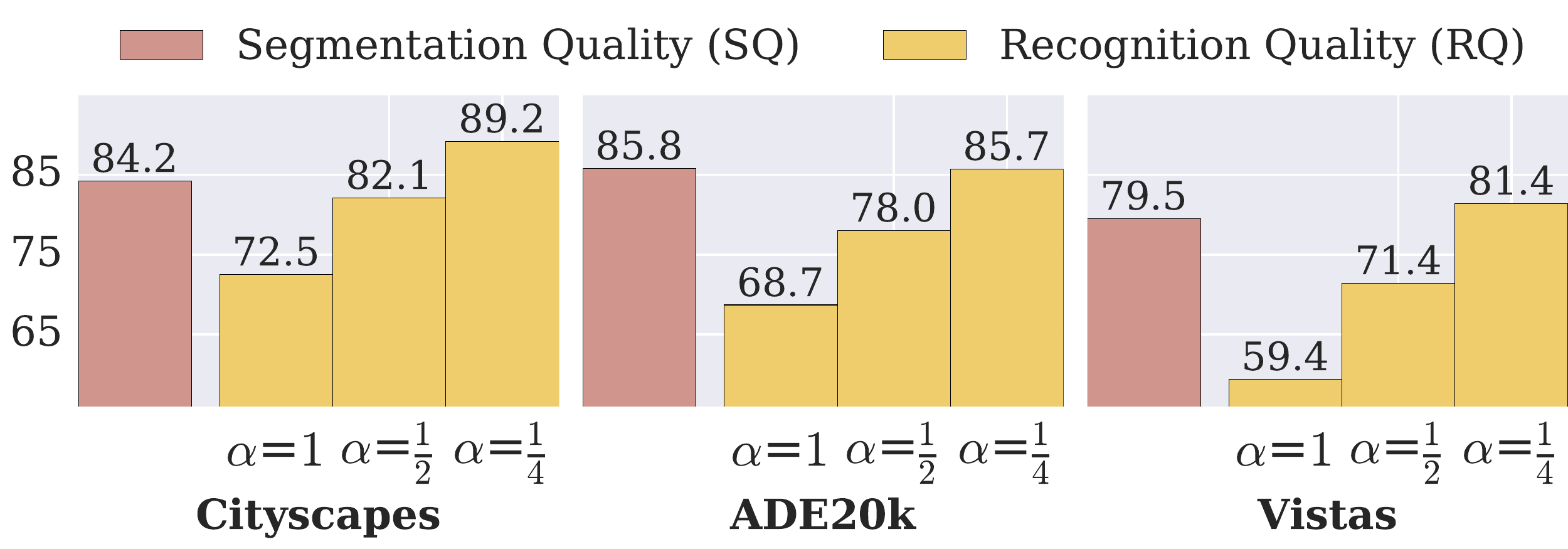}
\caption{\textbf{SQ \vs RQ} for different $\alpha$, see (\ref{eq:alpha}). Lowering $\alpha$ reduces the penalty of unmatched segments and thus increases the reported RQ (SQ is not affected). We use $\alpha$ of 0.5 throughout but by tuning $\alpha$ one can balance the influence of SQ and RQ in the final metric.}
\label{fig:alpha-balance}
\end{figure}

\section{Machine Performance Baselines}\label{sec:machines}

We now present simple machine baselines for panoptic segmentation. We are interested in three questions: (1) How do heuristic combinations of top-performing instance and semantic segmentation systems perform on panoptic segmentation? (2) How does PQ compare to existing metrics like AP and IoU? (3) How do the machine results compare to the human results that we presented previously?

\paragraph{Algorithms and data.} We want to understand panoptic segmentation in terms of existing well-established methods. Therefore, we create a basic PS system by applying reasonable heuristics (described shortly) to the output of existing top instance and semantic segmentation systems.

We obtained algorithm output for three datasets. For \textit{Cityscapes}, we use the val set output generated by the current leading algorithms (PSPNet \cite{zhao2017pspnet} and Mask R-CNN \cite{he2017mask} for semantic and instance segmentation, respectively). For \textit{ADE20k}, we received output for the winners of both the semantic \cite{places_sem_best,places_sem_second} and instance \cite{places_inst_best,places_inst_second} segmentation tracks on a 1k subset of test images from the 2017 Places Challenge. For \textit{Vistas}, which is used for the LSUN'17 Segmentation Challenge, the organizers provide us with 1k test images and results from the winning entries for the instance and semantic segmentation tracks \cite{mapillary_inst,mapillary_sem}.

Using this data, we start by analyzing PQ for the instance and semantic segmentation tasks separately, and then examine the full panoptic segmentation task. Note that our `baselines' are very powerful and that simpler baselines may be more reasonable for fair comparison in papers on PS.

\begin{table}
\tablestyle{4pt}{1.05}
\begin{tabular}{@{}l|cc|ccc@{}}
 {\bf Cityscapes}
 & AP & AP\tss{NO} & PQ\things & SQ\things & RQ\things\\\shline
Mask R-CNN+COCO \cite{he2017mask}
 & {\bf 36.4} & {\bf 33.1} & {\bf 54.0} & {\bf 79.4} & {\bf 67.8}\\
Mask R-CNN \cite{he2017mask}
 & 31.5 & 28.0 & 49.6 & 78.7 & 63.0\\
\multicolumn{6}{c}{\vspace{-0.8em}}\\
{\bf ADE20k} & AP & AP\tss{NO} & PQ\things & SQ\things & RQ\things\\\shline
Megvii \cite{places_inst_best}
 & {\bf 30.1} & {\bf 24.8} & {\bf 41.1} & {\bf 81.6} & {\bf 49.6}\\
G-RMI \cite{places_inst_second}
 & 24.6 & 20.6 & 35.3 & 79.3 & 43.2
\end{tabular}
\caption{\textbf{Machine results on instance segmentation} (stuff classes ignored). Non-overlapping predictions are obtained using the proposed heuristic. AP\tss{NO} is AP of the non-overlapping predictions. As expected, removing overlaps harms AP as detectors benefit from predicting multiple overlapping hypotheses. Methods with better AP also have better AP\tss{NO} and likewise improved PQ.}
\label{tab:machines-instance}
\end{table}

\paragraph{Instance segmentation.} Instance segmentation algorithms produce overlapping segments. To measure PQ, we must first resolve these overlaps. To do so we develop a simple non-maximum suppression (NMS)-like procedure. We first sort the predicted segments by their confidence scores and remove instances with low scores. Then, we iterate over sorted instances, starting from the most confident. For each instance we first remove pixels which have been assigned to previous segments, then, if a sufficient fraction of the segment remains, we accept the non-overlapping portion, otherwise we discard the entire segment. All thresholds are selected by grid search to optimize PQ. Results on Cityscapes and ADE20k are shown in Table~\ref{tab:machines-instance} (Vistas is omitted as it only had one entry to the 2017 instance challenge). Most importantly, AP and PQ track closely, and we expect improvements in a detector's AP will also improve its PQ.

\paragraph{Semantic segmentation.} Semantic segmentations have no overlapping segments by design, and therefore we can directly compute PQ. In Table \ref{tab:machines-semantic} we compare mean IoU, a standard metric for this task, to PQ. For Cityscapes, the PQ gap between methods corresponds to the IoU gap. For ADE20k, the gap is much larger. This is because whereas IoU counts correctly predicted pixel, PQ operates at the level of instances. See the Table \ref{tab:machines-semantic} caption for details.

\begin{table}
\tablestyle{6pt}{1.05}
\begin{tabular}{@{}l|c|ccc@{}}
{\bf Cityscapes} & IoU & PQ\stuff & SQ\stuff & RQ\stuff\\\shline
PSPNet multi-scale \cite{zhao2017pspnet}
 & {\bf 80.6} & {\bf 66.6} & {\bf 82.2} & {\bf 79.3}\\
PSPNet single-scale \cite{zhao2017pspnet}
 & 79.6 & 65.2 & 81.6 & 78.0\\
\multicolumn{5}{c}{\vspace{-0.8em}}\\
{\bf ADE20k} & IoU & PQ\stuff & SQ\stuff & RQ\stuff\\\shline
CASIA\_IVA\_JD \cite{places_sem_best}
 & {\bf 32.3} & {\bf 27.4} & {\bf 61.9} & {\bf 33.7}\\
G-RMI \cite{places_sem_second}
 & 30.6 & 19.3 & 58.7 & 24.3
\end{tabular}
\caption{\textbf{Machine results on semantic segmentation} (thing classes ignored). Methods with better mean IoU also show better PQ results. Note that G-RMI has quite low PQ. We found this is because it hallucinates many small patches of classes not present in an image. While this only slightly affects IoU which counts \emph{pixel} errors it severely degrades PQ which counts \emph{instance} errors.}
\label{tab:machines-semantic}
\end{table}

\paragraph{Panoptic segmentation.} To produce algorithm outputs for PS, we start from the non-overlapping instance segments from the NMS-like procedure described previously. Then, we combine those segments with semantic segmentation results by resolving any overlap between thing and stuff classes in favor of the thing class (\ie, a pixel with a thing and stuff label is assigned the thing label and its instance id). This heuristic is imperfect but sufficient as a baseline.

Table~\ref{tab:separate-vs-panoptic} compares PQ\stuff and PQ\things computed on the combined (`panoptic') results to the performance achieved from the separate predictions discussed above. For these results we use the winning entries from each respective competition for both the instance and semantic tasks. Since overlaps are resolved in favor of things, PQ\things is constant while PQ\stuff is slightly lower for the panoptic predictions. Visualizations of panoptic outputs are shown in Figure \ref{fig:examples}.

\paragraph{Human \vs machine panoptic segmentation.} To compare human \vs machine PQ, we use the machine panoptic predictions described above. For human results, we use the dual-annotated images described in \S\ref{sec:human} and use bootstrapping to obtain confidence intervals since these image sets are small. These comparisons are imperfect as they use different test images and are averaged over different classes (some classes without matches in the dual-annotated tests sets are omitted), but they can still give some useful signal.

We present the comparison in Table~\ref{tab:human-vs-machines}. For SQ, machines trail humans only slightly. On the other hand, machine RQ is dramatically lower than human RQ, especially on ADE20k and Vistas. This implies that recognition, \ie, classification, is the main challenge for current methods. Overall, there is a significant gap between human and machine performance. We hope that this gap will inspire future research for the proposed panoptic segmentation task.

\newcommand{\incpr}[1]{\includegraphics[height=.134\linewidth]{pics/panoptic_prediction/#1.jpg}}
\begin{figure*}
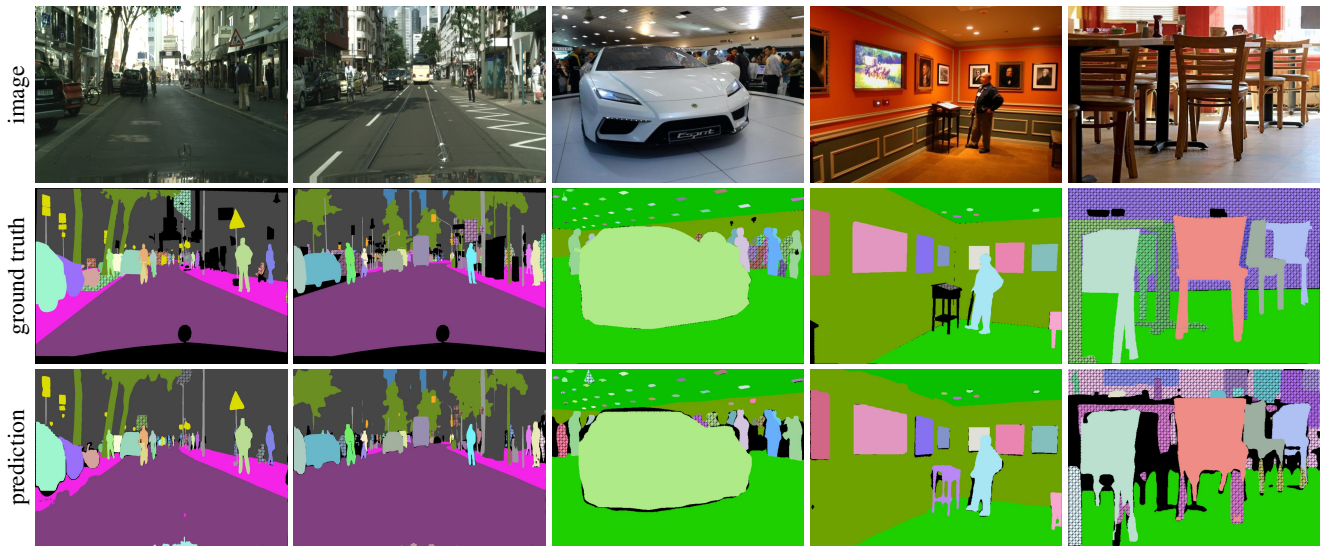

\begin{subfigure}{1.0\linewidth}\centering
 \rotatebox{90}{\begin{minipage}{.134\linewidth}\center\small image\end{minipage}}
 \incpr{cityscapes_82_img}
 \incpr{cityscapes_234_img}
 \incpr{ade_36_img}
 \incpr{ade_234_img}
 \incpr{ade_384_img}
\end{subfigure}\\[1pt]
\begin{subfigure}{1.0\linewidth}\centering
 \rotatebox{90}{\begin{minipage}{.134\linewidth}\center\small ground truth\end{minipage}}
 \incpr{cityscapes_82_gt}
 \incpr{cityscapes_234_gt}
 \incpr{ade_36_gt}
 \incpr{ade_234_gt}
 \incpr{ade_384_gt}
\end{subfigure}\\[1pt]
\begin{subfigure}{1.0\linewidth}\centering
 \rotatebox{90}{\begin{minipage}{.134\linewidth}\center\small prediction\end{minipage}}
 \incpr{cityscapes_82_pred}
 \incpr{cityscapes_234_pred}
 \incpr{ade_36_pred}
 \incpr{ade_234_pred}
 \incpr{ade_384_pred}
\end{subfigure}\vspace{-2mm}
\caption{\textbf{Panoptic segmentation results} on Cityscapes (left two) and ADE20k (right three). Predictions are based on the merged outputs of state-of-the-art instance and semantic segmentation algorithms (see Tables \ref{tab:machines-instance} and \ref{tab:machines-semantic}). Colors for matched segments (IoU$>$0.5) match (crosshatch pattern indicates unmatched regions and black indicates unlabeled regions). Best viewed in color and with zoom.}
\label{fig:examples}
\end{figure*}

\begin{table}
\tablestyle{18pt}{1.05}\resizebox{!}{20mm}{
\begin{tabular}{@{}l|ccc@{}}
 {\bf Cityscapes} & PQ & PQ\stuff & PQ\things\\\shline
 machine-separate & n/a  & 66.6 & 54.0\\
 machine-panoptic & 61.2 & 66.4 & 54.0\\
\multicolumn{4}{c}{\vspace{-0.8em}}\\
 {\bf ADE20k} & PQ & PQ\stuff & PQ\things\\\shline
 machine-separate & n/a  & 27.4 & 41.1\\
 machine-panoptic & 35.6 & 24.5 & 41.1\\
\multicolumn{4}{c}{\vspace{-0.8em}}\\
 {\bf Vistas} & PQ & PQ\stuff & PQ\things\\\shline
 machine-separate & n/a  & 43.7 & 35.7\\
 machine-panoptic & 38.3 & 41.8 & 35.7
\end{tabular}}\vspace{-1mm}
\caption{\textbf{Panoptic \vs independent predictions.} The `machine-separate' rows show PQ of semantic and instance segmentation methods computed independently (see also Tables \ref{tab:machines-instance} and \ref{tab:machines-semantic}). For `machine-panoptic', we merge the non-overlapping thing and stuff predictions obtained from state-of-the-art methods into a true panoptic segmentation of the image. Due to the merging heuristic used, PQ\things stays the same while PQ\stuff is slightly degraded.}
\label{tab:separate-vs-panoptic}\vspace{-1mm}
\end{table}

\section{Future of Panoptic Segmentation}

Our goal is to drive research in novel directions by inviting the community to explore the new panoptic segmentation task. We believe that the proposed task can lead to expected and unexpected innovations. We conclude by discussing some of these possibilities and our future plans.

Motivated by simplicity, the PS `algorithm' in this paper is based on the \emph{heuristic} combination of outputs from top-performing instance and semantic segmentation systems. This approach is a basic first step, but we expect more interesting algorithms to be introduced. Specifically, we hope to see PS drive innovation in at least two areas: (1) Deeply integrated end-to-end models that simultaneously address the dual stuff-and-thing nature of PS. A number of instance segmentation approaches including \cite{liu2017sgn,arnab2017pixelwise,bai2016deep,kirillov2016instancecut} are designed to produce non-overlapping instance predictions and could serve as the foundation of such a system. (2) Since a PS cannot have overlapping segments, some form of higher-level `reasoning' may be beneficial, for example, based on extending learnable NMS \cite{desai2011discriminative,hosang2017learning} to PS. We hope that the panoptic segmentation task will invigorate research in these areas leading to exciting new breakthroughs in vision.

Finally we note that the panoptic segmentation task was featured as a challenge track by both the COCO \cite{lin2014coco} and Mapillary Vistas \cite{neuhold2017mapillary} recognition challenges and that the proposed task has already begun to gain traction in the community (\eg \cite{li2018attention,xiong2019upsnet,yang2019deeperlab,liu2019end,li2018weakly,li2018learning,kirillov2019panopticfpn} address PS).

\begin{table}
\tablestyle{6pt}{1.05}\resizebox{!}{20mm}{
\begin{tabular}{@{}l|lll|ll@{}}
{\bf Cityscapes}
 & PQ & SQ & RQ & PQ\stuff & PQ\things\\\shline
 human & 69.6\tpm{2.5}{2.7} & 84.1\tpm{0.8}{0.8} & 82.0\tpm{2.7}{2.9} & 71.2\tpm{2.3}{2.5} & 67.4\tpm{4.6}{4.9}\\
 machine & 61.2 & 80.9 & 74.4 & 66.4 & 54.0\\
\multicolumn{6}{c}{\vspace{-0.8em}}\\
{\bf ADE20k} & PQ & SQ & RQ & PQ\stuff & PQ\things\\\shline
 human & 67.6\tpm{2.0}{2.0} & 85.7\tpm{0.6}{0.6} & 78.6\tpm{2.1}{2.1} & 71.0\tpm{3.7}{3.2} & 66.4\tpm{2.3}{2.4}\\
 machine & 35.6 & 74.4 & 43.2 & 24.5 & 41.1\\
\multicolumn{6}{c}{\vspace{-0.8em}}\\
{\bf Vistas} & PQ & SQ & RQ & PQ\stuff & PQ\things\\\shline
 human & 57.7\tpm{1.9}{2.0} & 79.7\tpm{0.8}{0.7} & 71.6\tpm{2.2}{2.3} & 62.7\tpm{2.8}{2.8} & 53.6\tpm{2.7}{2.8}\\
 machine & 38.3 & 73.6 & 47.7 & 41.8 & 35.7
\end{tabular}}\vspace{-1mm}
\caption{\textbf{Human \vs machine performance.} On each of the considered datasets human consistency is much higher than machine performance (approximate comparison, see text for details). This is especially true for RQ, while SQ is closer. The gap is largest on ADE20k and smallest on Cityscapes. Note that as only a small set of human annotations is available, we use bootstrapping and show the the 5\tss{th} and 95\tss{th} percentiles error ranges for human results.}
\label{tab:human-vs-machines}\vspace{-1mm}
\end{table}

\linespread{0.95}{\footnotesize\bibliographystyle{style.files/ieee}\bibliography{panoptic}}

\end{document}